\documentclass[runningheads,a4paper,orivec]{llncs}

\usepackage{algorithm}
\usepackage[noend]{algpseudocode}
\usepackage[utf8]{inputenc}
\usepackage[T1]{fontenc}
\usepackage{mathtools}
\usepackage{dsfont}
\usepackage{color}
\usepackage[mathscr]{euscript}
\usepackage{comment}
\usepackage{graphicx}
\usepackage{placeins}
\usepackage{float}
\usepackage{multirow}
\usepackage{pdflscape}
\usepackage{caption}
\usepackage{pgfplots}
\usepackage{subcaption}
\usepackage{mathabx}
\usepackage{tikz}
\usepackage{hyperref}
\usepackage[commandnameprefix=ifneeded,final]{changes}
\usepackage{orcidlink}
\usepackage[T1]{fontenc}

\usepackage[normalem]{ulem}

\usepackage{newtxtext}       %

\usepackage[varvw]{newtxmath}       %

\newcommand{\mcF}{\mathcal{F}}

\newcommand{\mcN}{\mathcal{N}}

\newcommand{\mcR}{\mathcal{R}}

\newcommand{\mbR}{\mathbb{R}}

\newcommand{\mbZ}{\mathbb{Z}}

\newcommand{\sign}{\text{sgn}}
\newcommand{\tst}{\tau^\star}

\usepackage{graphicx}

\makeatletter
\g@addto@macro\normalsize{%
	\setlength\abovedisplayskip{5pt plus 2pt minus 2pt}%
	\setlength\belowdisplayskip{5pt plus 2pt minus 2pt}%
	\setlength\abovedisplayshortskip{0pt plus 2pt}%
	\setlength\belowdisplayshortskip{3pt plus 2pt minus 1pt}}
\makeatother
\begin{document}
    \title{Neural Disjunctive Normal Forms for Learning Exact Rank Filters}
    
	\author{Diego Marcondes\inst{1}\orcidlink{0000-0002-6087-4821} \and
		Wonder A. L. Alves\inst{2}\orcidlink{0000-0003-0430-950X} \and Ronaldo~F.~Hashimoto\inst{3}\orcidlink{0000-0002-6399-8790}}
	\authorrunning{D. Marcondes et al.}
	\institute{Mathematical Sciences Institute and France-Australia Mathematical Sciences and Interactions ANU-CNRS International Research Lab, The Australian National University \and
		PPGI, Universidade Nove de Julho, S\~{a}o Paulo, Brazil\and IME, Universidade de S\~{a}o Paulo, S\~{a}o Paulo, Brazil}
	\maketitle              %
	\begin{abstract}
		Gradient-based learning of morphological operators is hindered by locally sparse derivatives with respect to the structuring elements. We develop a method for learning rank filters that retains exact operator evaluation while distributing the derivatives across the window. Using threshold decomposition, we construct a neural disjunctive normal form representation and establish a sufficient condition for exactness with integer-valued images and structuring elements. We then derive a smooth surrogate from the unique root of a smooth threshold equation. Implicit differentiation gives closed-form derivatives with normalised local weights, whose distribution is controlled by a smoothing parameter. These derivatives are used in a straight-through procedure: the forward pass evaluates the exact rank filter, while the backward pass uses the surrogate. For fixed ranks, the procedure learns real-valued structuring elements of any rank filter, including the median, and extends to compositions, infima, and suprema of rank filters. Experiments on image denoising using MNIST and Fashion-MNIST and retinal vessel segmentation show that, starting from a generic random initialisation, the proposed approach consistently improves performance over training with exact subgradients in every setting. In image denoising, our best network also outperforms smooth morphological networks in all settings. On vessel segmentation, the learned erosions recover structuring elements that fit the vessels, as theoretically expected, whereas most of those trained with exact subgradients remain close to their initialisation.

        \keywords{Mathematical morphology \and Morphological neural networks \and Rank filters \and Neural disjunctive normal form \and Straight-through estimator}
	\end{abstract}

    \section{Introduction}
\label{SecIntro}

Rank filters, including erosions and dilations, describe image transformations through local order statistics and structuring elements \cite{maragos1987morphological}. Learning the structuring elements from paired images adapts these transformations to a task, %
and the resulting operator is determined by its window, rank, and structuring element. %
Preserving exact evaluation keeps this description valid throughout training and inference. However, at each pixel where the selected order statistic is attained at a unique position, %
the derivative with respect to the structuring element is nonzero only at that position. This local gradient sparsity can hinder optimisation and make training sensitive to initialisation \cite{blusseau2024training,dimitrova2025learning,dimitrova2026optimizing}.

Morphological networks have been trained with the exact operators and their subgradient \cite{franchi2020deep}, which inherits this sparsity. Recent works keep this setting and address it through guidelines on initialisation and learning rates \cite{blusseau2024training}, and, for a single supremum of erosions, through data-driven initialisations and denser descent updates \cite{dimitrova2025learning,dimitrova2026optimizing}. Alternatively, smooth layers can replace the operators by differentiable approximations \cite{bottenmuller2025improving,hermary2022learning}, so that the trained operator only approximates a morphological one. As a compromise, older alternatives keep the exact operator in the forward pass and modify only its derivative. Salembier \cite{salembier1992adaptive} adapted rank filters through an implicit formulation of the rank by sign functions, and Pessoa and Maragos \cite{pessoa1998mrl} replaced the rank indicator vector by smoothed impulses that spread the gradient over the window. This smoothing is a heuristic design choice rather than the derivative of a function representing the filter.
 
 In this paper, we rely on the threshold decomposition \cite{maragos1987morphological,wendt1986stack} of rank filters to represent them as a sum, over thresholds, of order statistics of Boolean variables, and adapt the semi-symbolic layer of pix2rule \cite{cingillioglu2021pix2rule} to obtain a neural disjunctive normal form representation of these order statistics, which yields a smooth representation of rank filters that is exact on the integer grid when the smoothing parameter is large enough. Implicit differentiation of the root of a smooth threshold equation yields a surrogate derivative with normalised weights over the window which have the form of the smoothed impulses of Pessoa and Maragos \cite{pessoa1998mrl}. However, unlike their heuristic approach, our weights are obtained as the exact derivatives of a mathematically defined smooth surrogate of the filter, addressing gradient sparsity in a principled way.
 
 For fixed ranks, we use these derivatives in a straight-through procedure \cite{bengio2013estimating} that learns real-valued structuring elements: the forward pass evaluates the exact filter, while the backward pass uses the surrogate. The smoothing parameter controls the distribution of the local derivative across the window, while the evaluated filter remains exact and independent of that choice. The procedure applies to any rank, such as the median, which smooth layers do not usually cover, and to compositions, infima and suprema of rank filters. It thus, in principle, extends to grey-level images the framework of discrete morphological neural networks \cite{marcondes2024discrete}, which learn exact binary operators combined by composition, infimum and supremum in architectures designed from prior knowledge.

We evaluate the procedure on image denoising and retinal vessel segmentation, comparing it with direct differentiation (subgradient) of exact operators and, for denoising, with smooth morphological networks. These comparisons assess whether redistributing the derivatives improves learning while retaining exact operator evaluation. Starting from a generic random initialisation, not tailored to the data, our method consistently improves on the exact subgradient in every setting, without requiring the specialised initialisation strategies employed by recent exact-training approaches \cite{blusseau2024training,dimitrova2025learning,dimitrova2026optimizing} and our best network outperforms the smooth networks in all denoising settings, while the trained operator remains exact. Moreover, just as the layers of discrete morphological neural networks specialise into the expected steps of a task, such as denoising followed by boundary extraction, without being constrained to do so \cite{marcondes2025unrestricted}, the erosions learned by our method for retinal vessels recover structuring elements of different sizes and orientations that fit into the vessels, as theoretically expected \cite{zana2001segmentation}, whereas most of those learned with the exact subgradient remain close to their random initialisation.

Our main contributions are therefore: (i) a theoretically grounded approximation of rank-filter derivatives, rather than a heuristic redistribution of gradients \cite{pessoa1998mrl}; (ii) exact forward evaluation combined with smoother backward gradients, mitigating the dependence on specialised initialisation strategies of recent exact-training methods \cite{blusseau2024training,dimitrova2025learning,dimitrova2026optimizing}; and (iii) the demonstration of effective training on non-trivial image processing tasks, starting from generic random initialisations, improving upon exact-subgradient training and smooth morphological layers.

    \section{Theoretical background}
\label{SecBackground}

	Let $E = \mbZ^{2}$. Fix $m \in \mbZ_{+}$, define $M \coloneqq \{0,1,\dots,m\}$ and let $\mcF \coloneqq M^{E}$ be the space of all functions from $E$ to $M$. Elements $f \in \mcF$ are greyscale images in which the value $f(x)$ is the intensity of the pixel $x \in E$. Let $W \subset E, |W| \geq 3,$ be a finite set with an odd number of elements and for $x \in E$ let $W + x \coloneqq \{y + x: y \in W\}$ be the translation of $W$ by $x$. We denote by $f|_{A}:A \to M$ the restriction of $f$ to a set $A \subset E$. We identify the restriction of $f$ to $W+x$ with the function on $W$ given by $y \mapsto f(x+y)$, and keep the notation $f|_{W+x}$ for this translated function. Thus, $f|_{W+x}$ is an element of $\mcF_{W} \coloneqq M^{W}$. We denote general elements of $\mcF_{W}$ by $f_{W}$.

	Consider in $\mcF_{W}$ the point-wise partial order $f_{W} \leq g_{W}$ iff $f_{W}(y) \leq g_{W}(y)$ for all $y \in W$. The poset $(\mcF_{W},\leq)$ is actually a complete lattice, with infimum and supremum denoted by $\wedge$ and $\vee$, respectively. We may also use $\wedge$ and $\vee$ to represent the minimum and maximum of real numbers, and it should be clear from context what these symbols mean. We define the sign function by $\sign(z)=-1$ for $z<0$ and $\sign(z)=+1$ for $z\geq 0$.

\subsection{Rank filters and threshold decomposition}

   We recall rank filters and their threshold decomposition, and refer to \cite{maragos1987morphological,wendt1986stack} for proofs. For a function $g: W \to \mbZ$, let $[g]_{(1)} \leq [g]_{(2)} \leq \cdots \leq [g]_{(|W|)}$ be its order statistics, that is an ordering of the vector $(g(y): y \in W)$. The $q$-th rank filter with structuring element $a: W \to \mbZ$ is given by
	\begin{align}
		\label{gen_filter}
		\psi_{a,q}(f)(x) = ([f|_{W + x} - a]_{(q)} \vee 0) \wedge m
	\end{align}
	for all $f \in \mcF$ and $x \in E$, and for $q = 1,\dots,|W|$. As special cases, we have the erosion by $a$ when $q = 1$,
	a dilation by the structuring element $b:-W\to\mbZ$ when $q=|W|$, where $-W=\{-y:y\in W\}$ and $b(-y)=-a(y)$ for all $y\in W$,
	and the usual median filter when $q = (|W| + 1)/2$ and $a(y)=0$ for all $y \in W$. The following proposition states the threshold decomposition for integer-valued $f_W$ and $a$, with thresholds applied to $f_W-a$. Figure~\ref{fig_threshold_decomposition} illustrates Proposition~\ref{main_prop}.
	
	\begin{proposition}
		\label{main_prop}
		For $a: W \to \mbZ$ and $q = 1,\dots,|W|$, the $q$-th rank filter satisfies
		\begin{align}
			\label{rank_sum}
			\psi_{a,q}(f)(x) = \sum_{t = 1}^{m} \left[\left(\mathds{1}\{f|_{W + x}(y) - a(y) \geq t\}: y \in W\right)\right]_{(q)}
		\end{align}
		for all $f \in \mcF$ and $x \in E$. In particular, since the terms in the sum above are non-increasing in $t$, $\psi_{a,q}(f)(x) = 0$ if $\left[\left(\mathds{1}\{f|_{W + x}(y) - a(y) \geq 0\}: y \in W\right)\right]_{(q)} = 0$ and
        \begin{align}
            \label{max_form_rank}
            \psi_{a,q}(f)(x) = \max\{t \in M: \left[\left(\mathds{1}\{f|_{W + x}(y) - a(y) \geq t\}: y \in W\right)\right]_{(q)} = 1\},
        \end{align}
        otherwise.
	\end{proposition}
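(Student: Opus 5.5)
The plan is to reduce the statement to the elementary fact that thresholding commutes with order statistics, and then to write a clipped integer as a sum of indicators. Fix $f\in\mcF$ and $x\in E$, and set $g \coloneqq f|_{W+x}-a$, an integer-valued function on $W$. For any integer $t$, the map $z\mapsto \mathds{1}\{z\geq t\}$ is non-decreasing. A non-decreasing map $\phi$ applied coordinatewise preserves the ordering of the entries, so $[\phi(g)]_{(q)}=\phi([g]_{(q)})$. This gives
\begin{align*}
\left[\left(\mathds{1}\{g(y)\geq t\}: y \in W\right)\right]_{(q)} = \mathds{1}\{[g]_{(q)}\geq t\}.
\end{align*}
I would justify this carefully by sorting $g$ as $g(y_1)\leq\dots\leq g(y_{|W|})$. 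The indicator vector in that same order is then non-decreasing, so its $q$-th smallest entry is $\mathds{1}\{g(y_q)\geq t\}$. Ties cause no trouble, since the order statistics depend only on the multiset of values.

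Next, let $s\coloneqq[g]_{(q)}\in\mbZ$. Because $s$ is an integer, the layer-cake identity gives
\begin{align*}
\sum_{t=1}^{m}\mathds{1}\{s\geq t\} = \#\{t\in\{1,\dots,m\}: t\leq s\} = (s\vee 0)\wedge m.
\end{align*}
To confirm this, I would check the three cases $s\leq 0$, $0<s<m$ and $s\geq m$. Combining the two displays with the definition \eqref{gen_filter} yields \eqref{rank_sum}.

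For the remaining claims, the terms $\mathds{1}\{s\geq t\}$ are non-increasing in $t$. If the term at $t=0$ vanishes, then $s<0$, every term with $t\geq 1$ vanishes, and hence $\psi_{a,q}(f)(x)=0$. Otherwise $s\geq 0$, so the set $\{t\in M: s\geq t\}$ contains $0$ and equals $\{0,\dots,s\wedge m\}$. Its maximum is $s\wedge m=(s\vee 0)\wedge m$, which is \eqref{max_form_rank}.

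The only step needing care is the commutation of order statistics with monotone maps. This is the "stacking" property, and it is routine once the entries are sorted. The rest is bookkeeping with integers, and it relies essentially on $f$ and $a$ being integer-valued so that the threshold sum counts exactly up to $s$.
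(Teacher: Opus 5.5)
Your proof is correct and complete. Commuting the threshold $\mathds{1}\{\cdot\geq t\}$ with the $q$-th order statistic (the stacking property) and then counting thresholds for the single integer $s=[f|_{W+x}-a]_{(q)}$, with the three cases absorbing the clipping $(s\vee 0)\wedge m$, gives the sum identity, and the two ``in particular'' claims follow as you describe.

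The paper gives no proof of its own here; it defers to the cited threshold-decomposition and stack-filter references. Your argument is essentially the standard one from that literature.
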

	
	Hence, a rank filter is determined by the $q$-th order statistics of the Boolean variables $\mathds{1}\{f|_{W + x}(y) - a(y) \geq t\}$, $y \in W$, for $t = 1,\dots,m$, and \eqref{max_form_rank} avoids summing over the grey levels. In particular, since the terms of the sum in \eqref{rank_sum} are non-increasing in $t$, the maximum in \eqref{max_form_rank} can be computed by a bisection algorithm with complexity $\mathcal{O}(\log_{2}(m + 1))$ against $\mathcal{O}(m)$ for the sum in \eqref{rank_sum}. 
	
	\begin{figure}[ht]
		\centering
		\resizebox{0.9\textwidth}{!}{\definecolor{rkdark}{RGB}{31,78,121}
\definecolor{rkbar}{RGB}{107,143,173}
\definecolor{rklight}{RGB}{217,226,233}
\definecolor{rkband}{RGB}{236,241,245}
\definecolor{rkgrey}{RGB}{120,120,120}
\begin{tikzpicture}[
    every node/.style = {inner sep = 1pt},
    cell/.style = {draw = rkgrey!60, minimum width = 0.85cm, minimum height = 0.34cm, inner sep = 0pt},
    one/.style = {cell, fill = rklight},
    zero/.style = {cell, fill = white, text = rkgrey},
    bone/.style = {cell, draw = rkdark, fill = rkdark, text = white}]
    \def\vals{6,3,7,5,5}
    \def\h{0.42}

    \node[font = \bfseries] at (3,4.4) {(a) Shifted values};
    \node at (3,3.86) {$f_{W} - a = (6,3,7,5,5)$};
    \foreach \t in {0,...,8} {
        \draw[black!10] (0.5,\h*\t) -- (5.6,\h*\t);
        \node[left, text = rkgrey, font = \small] at (0.4,\h*\t) {\t};
    }
    \draw[rkgrey] (0.5,0) -- (0.5,8.3*\h);
    \draw[rkgrey] (0.5,0) -- (5.6,0);
    \foreach \v [count = \i] in \vals {
        \draw[rkdark, fill = rkbar] (\i - 0.3,0) rectangle (\i + 0.3,\h*\v);
        \node[above] at (\i,\h*\v) {\v};
        \node[below = 2pt] at (\i,0) {$y_{\i}$};
    }
    \draw[rkdark, thick, dashed] (0.5,5*\h) -- (5.6,5*\h);
    \node at (3,-0.8) {$(3,\boxed{5},5,6,7), \quad q = 2$};

    \begin{scope}[shift = {(6.6,0)}]
        \node[font = \bfseries] at (3.8,4.4) {(b) Threshold vectors};
        \fill[rkband] (0.1,5*\h - 0.21) rectangle (8.7,5*\h + 0.21);
        \node at (0.4,3.86) {$t$};
        \foreach \t in {1,...,8} \node at (0.4,\h*\t) {\t};
        \foreach \v [count = \i] in \vals {
            \node at (1.2*\i + 0.4,3.86) {$y_{\i}$};
            \foreach \t in {1,...,8} {
                \pgfmathsetmacro{\on}{\v >= \t ? 1 : 0}
                \ifdim \on pt > 0.5pt
                    \node[one] at (1.2*\i + 0.4,\h*\t) {1};
                \else
                    \node[zero] at (1.2*\i + 0.4,\h*\t) {0};
                \fi
            }
        }

        \node[font = \bfseries] at (9.5,4.4) {(c) Rank};
        \node at (8.1,3.86) {$[X_{t}]_{(2)}$};
        \foreach \t in {1,...,8} {
            \ifnum \t < 6
                \node[bone] at (8.1,\h*\t) {1};
            \else
                \node[zero] at (8.1,\h*\t) {0};
            \fi
        }
        \draw[->, rkdark, thick] (8.75,5*\h) -- (9.85,5*\h);
        \node at (10.6,5*\h + 0.65) {$\psi_{a,2}(f)(x)$};
        \node[cell, draw = rkdark, fill = rklight] at (10.6,5*\h) {5};
        \node[text = rkdark, font = \small, align = center] at (10.6,5*\h - 0.85) {$= \sum_{t = 1}^{8} [X_{t}]_{(2)}$\\last $t$ with $[X_{t}]_{(2)} = 1$};

        \node at (5,-0.8) {$X_{t}(y_{i}) = \mathds{1}\{f_{W}(y_{i}) - a(y_{i}) \geq t\}, \qquad [X_{t}]_{(2)} = 1 \iff \#\{i : X_{t}(y_{i}) = 1\} \geq 4$};
    \end{scope}
\end{tikzpicture}}
		\caption{\footnotesize Threshold decomposition for $m=8$, $q=2$ and $f_W-a=(6,3,7,5,5)$, with $W=\{y_1,\ldots,y_5\}$. %
		}
		\label{fig_threshold_decomposition}
	\end{figure}
    
    \subsection{Semi-symbolic neural layers}
    
    A neural network layer, called \textit{semi-symbolic} layer, was proposed by \cite{cingillioglu2021pix2rule} to approximate conjunctions (inf) and disjunctions (sup) of symbolic rules, and in particular of Boolean variables, based on the disjunctive normal form (DNF). By adapting this layer, one can exactly represent any order statistic of Boolean variables, as follows.
    
    Let $X: W \to \{-1,+1\}$ be a Boolean function on $W$. The neural DNF layer $\eta(X) \in \{-1,+1\}$ applied to $X$ with weights $v \in \mbR^W$ is given by
    \begin{align}
    	\label{node}
    	\eta(X) = \sign\left(\sum_{y \in W} v_{y} \, X(y) + \beta(\delta)\right) & & \text{ with } \beta(\delta) = \delta(\max_{y \in W} |v_y| - \sum_{y \in W} |v_{y}|)
    \end{align}
    for $\delta \in [-1,+1]$. The layer of \cite{cingillioglu2021pix2rule} uses $\tanh$ in place of $\sign$, which yields a smooth approximation in $[-1,+1]$, but we consider $\sign$ to obtain an exact representation.
    
    Let $X_{(1)} \leq X_{(2)} \leq \cdots \leq X_{(|W|)}$ be an ordering of the values of $X$. In particular, $X_{(1)} = \min_{y} X(y)$ and $X_{(|W|)} = \max_y X(y)$. We show that, by properly choosing\footnote{In the special case of $\delta = \pm 1$, the respective order statistic, the minimum and the maximum, can be represented by non-constant parameters $v \in \mbR^{W}$ with $v_{y} > \max_z v_z/2$ for all $y \in W$. We refer to \cite{cingillioglu2021pix2rule} for more details.} $v$ and $\delta$, any order statistic $X_{(q)}$ of $X$ can be represented by $\eta(X)$. A proof of Lemma \ref{lem1} is presented in Section \ref{SecProof}. An example of a DNF representation is presented in Figure \ref{fig_dnf_example}.
    
    \begin{lemma}
    	\label{lem1}
    	Fix $q \in \{1,\dots,|W|\}$. Let $v_{y} = \gamma > 0$ for all $y \in W$ and 
    	\begin{equation}
    		\label{delta_formula}
    		\delta = \frac{-2q + |W| + 1}{|W| - 1}.
    	\end{equation}
    	Then, $\eta(X) = X_{(q)}$ for all $X \in \{-1,+1\}^{W}$. In particular, $\eta(X) = \max_{y}\, X(y)$ if $\delta = -1$, $\eta(X) = \min_{y}\, X(y)$ if $\delta = +1$ and $\eta(X) = \text{Median}_{y}\, X(y)$ if $\delta = 0$.
    \end{lemma}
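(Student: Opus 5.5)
Since all weights equal $\gamma>0$, the argument of $\sign$ in \eqref{node} depends on $X$ only through the number of $+1$ entries. I would therefore compute it explicitly and compare it with the rank condition. Write $n=|W|$ and $k=\#\{y\in W: X(y)=+1\}$. Then
\begin{align*}
\sum_{y} v_y X(y) = \gamma(2k-n), \qquad \beta(\delta)=\delta(\gamma-n\gamma)=-\delta\gamma(n-1),
\end{align*}
and substituting \eqref{delta_formula} gives $\beta(\delta)=\gamma(2q-n-1)$. The total argument is $\gamma(2k-n+2q-n-1)=\gamma\bigl(2(k+q-n)-1\bigr)$.

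Next I characterise $X_{(q)}$ through $k$. The sorted vector has $n-k$ entries equal to $-1$ followed by $k$ entries equal to $+1$. Hence $X_{(q)}=+1$ if and only if $q>n-k$, that is, $k+q-n\geq 1$.

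Now compare the two sides, using that $k+q-n$ is an integer and $\gamma>0$. If $k+q-n\geq 1$, the argument is at least $\gamma>0$, so $\sign$ returns $+1$. If $k+q-n\leq 0$, the argument is at most $-\gamma<0$, so $\sign$ returns $-1$. In both cases $\eta(X)=X_{(q)}$. The convention $\sign(0)=+1$ plays no role, because the argument is an odd multiple of $\gamma$ and so never vanishes.

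For the particular cases, $q=n$ gives $\delta=-1$ and the maximum, $q=1$ gives $\delta=+1$ and the minimum, and $q=(n+1)/2$ gives $\delta=0$ and the median; this uses that $n$ is odd and $n\geq 3$, so the denominator $n-1$ is nonzero. The only delicate step is the parity bookkeeping: checking that the bias shifts the threshold exactly to $k\geq n-q+1$ with no zero case. The formula $2(k+q-n)-1$ settles this directly.
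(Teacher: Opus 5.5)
Your proof is correct and follows the paper's argument exactly. Both write the pre-activation as $\gamma\bigl(2(|X^{+}|+q-|W|)-1\bigr)$, note that $X_{(q)}=+1$ iff $|X^{+}|\geq |W|-q+1$, and conclude because this odd multiple of $\gamma$ is at least $\gamma$ or at most $-\gamma$; the special cases are read off from $q=|W|$, $q=1$ and $q=(|W|+1)/2$.
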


   \begin{figure}[!t]
    	\centering
    	\resizebox{0.9\textwidth}{!}{\definecolor{rkdark}{RGB}{31,78,121}
\definecolor{rklight}{RGB}{217,226,233}
\definecolor{rkband}{RGB}{236,241,245}
\definecolor{rkgrey}{RGB}{120,120,120}
\begin{tikzpicture}[
    every node/.style = {inner sep = 1pt},
    cell/.style = {draw = rkgrey!60, minimum width = 0.75cm, minimum height = 0.36cm, inner sep = 0pt},
    plus/.style = {cell, fill = rklight},
    minus/.style = {cell, fill = white, text = rkgrey},
    bpos/.style = {cell, draw = rkdark, fill = rkdark, text = white},
    op/.style = {draw = rkdark, minimum size = 0.55cm, inner sep = 0pt},
    arr/.style = {->, rkdark}]

    \node[font = \bfseries] at (2.9,3.1) {(a) Neural DNF layer};
    \foreach \x [count = \i] in {+1,-1,+1,+1,+1} {
        \pgfmathsetmacro{\yy}{2.5 - 0.5*\i}
        \ifnum \x > 0
            \node[plus] (x\i) at (1.1,\yy) {$+1$};
        \else
            \node[minus] (x\i) at (1.1,\yy) {$-1$};
        \fi
        \node[left] at (0.7,\yy) {$X(y_{\i})$};
        \draw[arr] (x\i.east) -- (2.6,1);
    }
    \node[font = \small, text = rkdark] at (1.95,2.35) {$v_{y} = 1$};
    \node[op, circle, fill = white] (s) at (2.9,1) {$\Sigma$};
    \node (b) at (3.45,2.35) {$\beta(\delta) = -2$};
    \draw[arr] (b) -- (s);
    \node[font = \small] at (2.9,0.45) {$3 - 2 = 1$};
    \node[op, rectangle, minimum width = 0.8cm] (g) at (4.15,1) {$\mathrm{sgn}$};
    \node[bpos, minimum width = 0.6cm] (o) at (5.3,1) {$+1$};
    \node at (5.3,1.55) {$\eta(X)$};
    \draw[arr] (s) -- (g);
    \draw[arr] (g) -- (o);
    \node at (2.9,-0.65) {$\delta = \frac{-2q + |W| + 1}{|W| - 1} = \frac{1}{2}, \quad \beta(\delta) = -4\delta = -2$};

    \begin{scope}[shift = {(6.9,0)}]
        \node[font = \bfseries] at (4.3,3.1) {(b) Output and rank};
        \fill[rkband] (5.94,-0.05) rectangle (6.86,2.62);
        \draw[rkdark, thick, dashed] (5.925,-0.1) -- (5.925,2.7);
        \foreach \lab [count = \r] in {{\#\{y : X(y) = +1\}},{\sum_{y} X(y)},{\sum_{y} X(y) - 2},{\eta(X)},{X_{(2)}}} {
            \node[left] at (2.05,2.95 - 0.55*\r) {$\lab$};
        }
        \foreach \k in {0,...,5} {
            \pgfmathtruncatemacro{\s}{2*\k - 5}
            \pgfmathtruncatemacro{\sb}{2*\k - 7}
            \pgfmathsetmacro{\x}{2.6 + 0.95*\k}
            \node at (\x,2.4) {\k};
            \node at (\x,1.85) {$\s$};
            \node at (\x,1.3) {$\sb$};
            \ifnum \k > 3
                \node[bpos] at (\x,0.75) {$+1$};
                \node[bpos] at (\x,0.2) {$+1$};
            \else
                \node[minus] at (\x,0.75) {$-1$};
                \node[minus] at (\x,0.2) {$-1$};
            \fi
        }
        \node at (4.3,-0.65) {$\eta(X) = +1 \iff \#\{y : X(y) = +1\} \geq 4 \iff X_{(2)} = +1$};
    \end{scope}
\end{tikzpicture}}
    	\caption{\footnotesize The neural DNF layer with $v_{y} = 1$ and $\delta = 1/2$ computes the $q = 2$ order statistic of $X \in \{-1,+1\}^{W}$, $|W| = 5$.} %
    	\label{fig_dnf_example}
    \end{figure}

    \section{A smooth surrogate for rank filters}
    \label{SecRep}
    
	A smooth surrogate for rank filters may be obtained by combining Proposition \ref{main_prop} with Lemma \ref{lem1} and considering a smooth approximation of the Boolean variables in \eqref{rank_sum}. Let $\sigma$ be the $\tanh$ activation function and, for $\alpha > 0$, define
		$S_{\alpha}(t,a,f_{W}) \coloneqq \sum_{y \in W} \sigma\left(\alpha\left(f_{W}(y) - a(y) - t + 1/2\right)\right)$
	for a structuring element $a: W \to \mbR$, $t \in M$ and $f_{W} \in \mcF_{W}$. From now on, let $\beta(\delta) \coloneqq \delta(1 - |W|)$, which is obtained from \eqref{node} when $v_{y} = 1$ for all $y \in W$. Denote by $H(z) = \mathds{1}\{z \geq 0\}$  the Heaviside step function.
    
    By exchanging $\sum_y v_y X(y)$ in \eqref{node} by $S_{\alpha}(t,a,f_{W})$, applying $H$ instead of $\sign$ there and summing over $t$, we conclude that any rank filter may be approximated by a DNF layer applied to \textit{smoothed} Boolean variables with output in $\{0,1\}$ instead of $\{-1,+1\}$. If $\alpha$ is large enough, then this representation is exact in the integer scale as stated in the next corollary, for which a proof is presented in Section \ref{SecProof}.
	
	\begin{corollary}
		\label{main_cor}
		For every $a: W \to \mbZ, q = 1,\dots,|W|$ and $\alpha > \ln(2|W| - 1)$ it holds
			$\sup_{f \in \mcF} \sup_{x \in E} |\psi_{a,q}(f)(x) - \mcN_{a,\alpha,\delta}(f|_{W + x})| = 0$
		in which
		\begin{align}
			\label{NN}
			\mcN_{a,\alpha,\delta}(f|_{W + x}) = \sum_{t = 1}^{m} H\left(S_{\alpha}(t,a,f|_{W+x}) + \beta(\delta)\right)
		\end{align}
		and $\delta$ is computed from $q$ by \eqref{delta_formula}. In particular, since $S_{\alpha}(t,a,f|_{W+x})$ is non-increasing in $t$, $\mcN_{a,\alpha,\delta}(f|_{W + x}) = 0$ if $S_{\alpha}(0,a,f|_{W+x}) + \beta(\delta) < 0$ and
        \begin{equation}
            \label{exact_N_max}
            \mcN_{a,\alpha,\delta}(f|_{W + x}) = \max\{t \in M: S_{\alpha}(t,a,f|_{W+x}) + \beta(\delta) \geq 0\},
        \end{equation}
        otherwise.
	\end{corollary}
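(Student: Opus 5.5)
The plan is to compare the two sums in Proposition~\ref{main_prop} and in \eqref{NN} term by term, for each fixed $t \in \{1,\dots,m\}$. It suffices to show that, for integer-valued $f_W$ and $a$,
\[
H\bigl(S_{\alpha}(t,a,f_W) + \beta(\delta)\bigr) = \bigl[\bigl(\mathds{1}\{f_W(y) - a(y) \geq t\}: y \in W\bigr)\bigr]_{(q)} .
\]
Summing over $t$ and applying \eqref{rank_sum} with $f_W = f|_{W+x}$ then gives the vanishing supremum, uniformly in $f$ and $x$.

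First I will set $n = |W|$ and compute the bias. With $\delta$ given by \eqref{delta_formula}, $\beta(\delta) = \delta(1-n) = 2q - n - 1$.

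Next I will describe the exact Boolean case. Let $X(y) = 2\,\mathds{1}\{f_W(y)-a(y)\geq t\} - 1 \in \{-1,+1\}$, and let $k$ be the number of $y$ with $X(y) = +1$. Then $\sum_y X(y) = 2k - n$. By Lemma~\ref{lem1} with $\gamma = 1$, the sign of $2k - n + \beta(\delta)$ is $X_{(q)}$. Directly, $X_{(q)} = +1$ iff $k \geq n - q + 1$, and
\[
2k - n + \beta(\delta) = 2(k - n + q) - 1 .
\]
This expression is an odd integer. It is $\geq 1$ when the $q$-th order statistic equals $1$, and $\leq -1$ otherwise. So the exact pre-activation has a margin of at least $1$ from zero.

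The key step is to bound how far $S_\alpha$ deviates from $2k-n$. Because $f_W(y) - a(y) - t$ is an integer $d$, the tanh argument is $\alpha(d + 1/2)$, where $d + 1/2 \geq 1/2$ if $d \geq 0$ and $d + 1/2 \leq -1/2$ if $d < 0$. By monotonicity and oddness of $\tanh$, each summand lies in $[\tanh(\alpha/2), 1]$ when $X(y) = +1$, and in $[-1, -\tanh(\alpha/2)]$ when $X(y) = -1$. Hence
\[
|S_\alpha(t,a,f_W) - (2k - n)| \leq n\bigl(1 - \tanh(\alpha/2)\bigr) = \frac{2n}{e^{\alpha} + 1}.
\]
This bound is strictly less than $1$ iff $e^{\alpha} > 2n - 1$, that is, iff $\alpha > \ln(2|W| - 1)$. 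The error is then smaller than the margin, so $S_\alpha + \beta(\delta)$ has the same sign as $2k - n + \beta(\delta)$. Therefore $H(S_\alpha + \beta(\delta)) = 1$ exactly when the $q$-th order statistic of the indicators is $1$, which proves the termwise identity.

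Finally, the "in particular" statements follow the same way as in Proposition~\ref{main_prop}. Since $\tanh$ is increasing, $S_\alpha(t,a,f_W)$ is non-increasing in $t$, so the summands $H(\cdot)$ form a non-increasing $\{0,1\}$-sequence in $t$. The termwise identity also extends to $t = 0$ by the same argument. Hence the sum equals $0$ when the $t = 0$ term already vanishes, and otherwise equals the largest $t \in M$ at which the term is $1$, which is \eqref{exact_N_max}. I expect the only delicate step to be the margin computation: confirming that the bias makes the exact pre-activation an odd integer, so that a margin of exactly $1$ is available, and checking that the tanh error bound matches the stated threshold $\ln(2|W|-1)$ with strict inequality.
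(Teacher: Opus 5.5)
Your proposal is correct and follows essentially the same route as the paper. Both compare the sums termwise in $t$, use that the exact pre-activation $2(k-n+q)-1$ is an odd integer with margin at least $1$, show that the tanh error is at most $|W|(1-\tanh(\alpha/2)) = 2|W|/(e^{\alpha}+1) < 1$ exactly when $\alpha > \ln(2|W|-1)$, and then conclude via \eqref{rank_sum} (your handling of the ``in particular'' part is also fine, though it needs only the monotonicity of $S_{\alpha}$ in $t$, so extending the termwise identity to $t=0$ is unnecessary but harmless).
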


    Since $S_{\alpha}(t,a,f|_{W+x})$, as a function of $t \in \mbR$, is strictly decreasing in $t$ with limit $-|W|$ and $|W|$ when $t \to \pm \infty$, there exists a unique $\tst \coloneqq \tst(\alpha,\delta,a,f|_{W+x}) \in \mbR$ such that $S_{\alpha}(\tst,a,f|_{W+x}) + \beta(\delta) = 0$ and it holds $\mcN_{a,\alpha,\delta}(f|_{W + x}) = \left(\lfloor \tst \rfloor \vee 0\right) \wedge m$ by \eqref{exact_N_max}. We note that $\tst$ can be efficiently computed by any root finding algorithm for one-dimensional monotonic smooth functions in $\mbR$, for example Newton's method started at a suitable point.   
    
    As discussed in the introduction, the goal is to have a smooth surrogate for rank filters that gives an approximation to their subgradient with respect to the structuring element entries. Corollary \ref{main_cor}, and in particular \eqref{exact_N_max}, provides such a surrogate by (a) taking $\alpha$ smaller than the exact threshold, (b) considering $\tst(\alpha,\delta,a,f|_{W+x})$ without truncating to a greyscale level and (c) taking the structuring element $a:W \to \mbR$ as real-valued. In this case, the derivative of $\tst(\alpha,\delta,a,f|_{W+x})$ with respect to $a(y)$ follows by implicit differentiation.

    \begin{proposition}
		\label{prop_dev}
		Fix $\alpha > 0$, $\delta \in [-1,+1]$, $f_{W} \in \mcF_{W}$ and $a:W \to \mbR$. Denote
		\begin{align}
			\label{sigma_y}
			\sigma_{y}(t) \coloneqq \sigma\left(\alpha\left(f_{W}(y) - a(y) - t + 1/2\right)\right), \quad y \in W, t \in \mbR.
		\end{align}
		Then, it holds
		\begin{align}
            \label{derivp}
			\frac{\partial\tst}{\partial a(y)} = -\frac{1 - \sigma_{y}(\tst)^{2}}{\sum_{z \in W} \left(1 - \sigma_{z}(\tst)^{2}\right)}.
		\end{align}
		In particular, all derivatives are negative and $\sum_{y \in W} \partial \tst/\partial a(y) = -1$.
	\end{proposition}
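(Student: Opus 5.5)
The plan is to apply the implicit function theorem to the defining equation of $\tst$. Set
\begin{align*}
G(t,a) \coloneqq S_{\alpha}(t,a,f_{W}) + \beta(\delta) = \sum_{z \in W} \sigma_{z}(t) + \delta(1-|W|),
\end{align*}
viewed as a function of $(t,a) \in \mbR \times \mbR^{W}$ with $\alpha$, $\delta$ and $f_{W}$ fixed. Since $\tanh$ is smooth, $G$ is $C^{\infty}$ jointly in $t$ and in the entries $a(y)$, and $\beta(\delta)$ does not depend on $a$. As noted after Corollary~\ref{main_cor}, for each $a$ there is a unique root $\tst(a)$ with $G(\tst(a),a)=0$. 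Hence the map $a \mapsto \tst(a)$ is well defined.

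First I compute the partial derivatives using $\sigma' = 1-\sigma^{2}$:
\begin{align*}
\frac{\partial G}{\partial t}(t,a) = -\alpha \sum_{z \in W}\left(1-\sigma_{z}(t)^{2}\right), \qquad \frac{\partial G}{\partial a(y)}(t,a) = -\alpha\left(1-\sigma_{y}(t)^{2}\right).
\end{align*}
The second identity holds because $a(y)$ enters only the $y$-th summand. Because $|\sigma_{z}(t)|<1$ for every real argument, each factor $1-\sigma_{z}(t)^{2}$ is strictly positive. Thus $\partial G/\partial t<0$ everywhere, and in particular at $(\tst,a)$.

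The implicit function theorem then gives a $C^{1}$ function near $a$ solving $G(t,a)=0$. By uniqueness of the root, this local solution coincides with $\tst$, so $\tst$ is differentiable and
\begin{align*}
\frac{\partial \tst}{\partial a(y)} = -\frac{\partial G/\partial a(y)}{\partial G/\partial t}\Big|_{t=\tst}.
\end{align*}
Substituting the two derivatives, the factors $\alpha$ cancel. The numerator and denominator both carry a minus sign, and the overall minus sign remains, which yields \eqref{derivp}.

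Both remaining claims follow directly from \eqref{derivp}. Every term $1-\sigma_{y}(\tst)^{2}$ is strictly positive, so each derivative is strictly negative. Summing \eqref{derivp} over $y \in W$ makes the numerator equal to the denominator, so the sum is $-1$.

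The only step needing care is justifying that $\tst$ is a differentiable function of $a$, rather than just a pointwise root. This is handled by the strict negativity of $\partial G/\partial t$, which is what allows the implicit function theorem to apply, together with global uniqueness of the root. I do not expect a genuine obstacle.
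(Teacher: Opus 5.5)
Your proposal is correct and follows essentially the same route as the paper: the implicit function theorem applied to $S_{\alpha}(t,a,f_{W})+\beta(\delta)$, with $\partial/\partial t = -\alpha\sum_{z}(1-\sigma_{z}^{2})<0$ and $\partial/\partial a(y) = -\alpha(1-\sigma_{y}^{2})$. Your extra step is a small improvement in rigour over the paper's sketch: you use global uniqueness of the root to identify the local implicit solution with $\tst$, and you derive the sign and sum claims explicitly.
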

	
	Extending \eqref{gen_filter} to real-valued $a$, the order statistic $[f|_{W+x}-a]_{(q)}$ is continuous and piecewise linear in $a$, with gradient $\partial [f|_{W+x}-a]_{(q)}/\partial a(y) = -\mathds{1}\{y = y_{q}\}$ whenever the $q$-th order statistic is attained at a unique $y_{q} \in W$. Under this uniqueness assumption, $\partial\tau^\star/\partial a(y)$ converges to this gradient as $\alpha \to \infty$. To see this, write $c_y := f_W(y) - a(y)$ and $u_y := \alpha(c_y - \tau^\star + 1/2)$, so that, with $\delta$ given by \eqref{delta_formula}, $\sum_{y} \sigma(u_y) = -\beta(\delta) = |W| + 1 - 2q$. Since $S_\alpha(t, a, f_W) + \beta(\delta)$ is positive at $t = c_{y_q} + 1/2 - \varepsilon$ and negative at $t = c_{y_q} + 1/2 + \varepsilon$ for sufficiently small \(\varepsilon>0\) and large enough $\alpha$, $\tau^\star - 1/2 \to c_{y_q}$. Hence $|u_y| \to \infty$ for $y \neq y_q$, so $1 - \sigma_y(\tau^\star)^2 \to 0$, whereas $\sigma(u_{y_q}) = |W| + 1 - 2q - \sum_{y \neq y_q} \sigma(u_y) \to 0$, so $1 - \sigma_{y_q}(\tau^\star)^2 \to 1$, and the claim follows from \eqref{derivp}. Thus, the surrogate is a smoothing of the true gradient and $\alpha$ has the role of \textit{distributing} the derivatives over the points in the window. Indeed, since $\sum_{y \in W} \partial \tst/\partial a(y) = -1$, the gradient with respect to $a$ is, up to a sign, a probability vector over $W$. In particular, $\alpha$ cannot rescale it, but only redistribute the probabilities, with large $\alpha$ concentrating the probability mass on the pixel attaining the $q$-th order statistic, while moderate $\alpha$ shares it across the window.

    The composition of rank filters, and in particular openings and closings, may be represented exactly by applying sequentially the layer $\mcN_{a,\alpha,\delta}$ with the respective structuring element and suitable value of $\delta$. Furthermore, the infimum and supremum of rank filters may be represented by adapting the layer in \eqref{NN} to compute the minimum ($\delta = +1$) or maximum ($\delta = -1$) of a collection $\{\psi_{a_{j},q}(f)(x): j = 1,\dots,n\}$ of $n$ rank filters applied to $f$ at point $x$. Formally, by the same proof as Corollary \ref{main_cor},
    \begin{align}
        \label{sup_rank}
    	\bigvee_{j=1}^{n} \psi_{a_{j},q}(f)(x) &= \sum_{t=1}^{m} H\Big(\sum_{j=1}^{n} \sigma\Big(\alpha\Big(\psi_{a_{j},q}(f)(x) - t + 1/2\Big)\Big) - (1 - n)\Big)
    \end{align}
    and the infimum is obtained by flipping the sign of $(1 - n)$ to positive, as long as $\alpha > \ln(2n - 1)$. We may define $\tst$ analogously as the root in $t$ of the expression inside $H$ in \eqref{sup_rank} and obtain a closed form derivative of $\tst$ with respect to each operator $\psi_{a_{j},q}(f)(x)$ as in Proposition \ref{prop_dev} which is necessary, by the chain rule, to compute the derivative of the supremum with respect to the entries of the structuring elements $a_{j}$.

    \section{Training exact rank filters with smoothed gradients}
	\label{SecTrain}

   	We propose to train rank filters by gradient descent with the \textit{exact} operator in the forward pass and the \textit{smooth} surrogate $\tst$ of Section~\ref{SecRep} in the backward pass, in what is known in the deep learning literature as a \textit{straight-through estimator} (STE) \cite{bengio2013estimating}.
	
	\subsection{Straight-through and derivatives}

    Following the STE framework, we consider the operator $\mcR_{a,\alpha,\delta}: \mcF_{W} \to \mbR$ given by
	\begin{align}
        \label{STE}
		\mcR_{a,\alpha,\delta}(f_{W}) = \tst(\alpha,\delta,a,f_{W}) + \text{sg}\left(\psi_{a,q}(f_{W}) - \tst(\alpha,\delta,a,f_{W})\right), \, f_{W} \in \mcF_W
	\end{align}
	in which $\text{sg}(\cdot)$ is the \textit{stop-gradient} function that is the identity function in the forward pass, and has gradient zero in the backward pass. In this formula, $q$ and $\delta$ satisfy relation \eqref{delta_formula}.
	
	In particular, it follows that, for a pair $(f,g) \in \mcF^{2}$ of input and output images and a finite frame $A \subset E$,
	\begin{align*}
		L(a,\delta;(f,g)) \coloneqq \frac{1}{|A|} \sum_{x \in A} (\mcR_{a,\alpha,\delta}(f|_{W+x}) - g(x))^{2} = \frac{1}{|A|} \sum_{x \in A} (\psi_{a,q}(f|_{W+x}) - g(x))^{2}
	\end{align*}
	is the MSE of the exact filter $\psi_{a,q}(f|_{W+x})$ while
	\begin{align}
		\label{deriv}
		\frac{\partial L}{\partial a(y)}(a,\delta;(f,g)) = \frac{2}{|A|} \sum_{x \in A} (\psi_{a,q}(f|_{W+x}) - g(x)) \frac{\partial \tst}{\partial a(y)}(\alpha,\delta,a,f|_{W+x})
	\end{align}
	in which the derivative of $\tst$ is given in Proposition \ref{prop_dev}. The equalities above are what we mean by exact on the forward pass and smooth on the backward pass, respectively. Since $S_{\alpha}$ depends on $f_{W}(y) - a(y)$, the derivative with respect to the input is $\partial \tst/\partial f_{W}(y) = -\partial \tst/\partial a(y)$, which propagates the gradient through compositions of rank filters.

    Since $-\partial \tst/\partial a$ is a probability vector over $W$ whose spread decreases with $\alpha$ (Section~\ref{SecRep}), a relatively \textit{low} value of $\alpha$ spreads the gradient and encourages the update of the whole structuring element. We emphasise that, whatever the choice of $\alpha$, the operator being trained is the exact $\psi_{a,q}$, and the surrogate is used only in the backward pass.

    The composition, infimum and supremum of rank filters may be trained in a similar fashion, by considering the respective composition, infimum or supremum of operators represented as in \eqref{STE}. Proceeding in this manner, the forward pass remains exact, but the backward pass will provide an approximation of the respective subgradient by backpropagation.

	To compute the derivative in \eqref{deriv}, the value of $\tst$ is needed in \eqref{derivp}. We compute it by Newton's method on $S_{\alpha}(t,a,f|_{W+x}) + \beta(\delta)$, or on the expression inside $H$ in \eqref{sup_rank}, started near the value of the forward pass, i.e., $t_{0} = \psi_{a,q}(f|_{W+x}) + 1/2$. The root is computed independently at each pixel, so it is parallel on a GPU. In the experiments, we use $4$ Newton steps for the rank filters and $8$ for the supremum \eqref{sup_rank}. Figure~\ref{fig_ste} illustrates the STE pipeline.
	
	 \begin{figure}[!t]
		\centering
		\resizebox{0.9\textwidth}{!}{\definecolor{rkdark}{RGB}{31,78,121}
\definecolor{rklight}{RGB}{217,226,233}
\definecolor{rkgrey}{RGB}{120,120,120}
\definecolor{rkback}{RGB}{204,85,0}
\definecolor{rkbacklight}{RGB}{251,232,216}
\begin{tikzpicture}[
    every node/.style = {inner sep = 1pt},
    fbox/.style = {draw = rkdark, fill = rklight, rounded corners = 2pt, align = center, inner sep = 3pt, minimum height = 0.8cm},
    bbox/.style = {draw = rkback, fill = rkbacklight, rounded corners = 2pt, align = center, inner sep = 3pt, minimum height = 0.8cm},
    fwd/.style = {-latex, rkdark, thick},
    bwd/.style = {-latex, rkback, thick, dashed}]

    \foreach \k in {1,0} \draw[rkgrey, fill = white] (0.15 + 0.1*\k,2.0 + 0.1*\k) rectangle ++(0.8,0.8);
    \node at (0.55,2.4) {$f$};
    \foreach \i in {0,1,2} \foreach \j in {0,1,2} \draw[rkgrey, fill = rkbacklight] (0.25 + 0.2*\i,0.15 + 0.2*\j) rectangle ++(0.2,0.2);
    \node[below] at (0.55,0.15) {$a$};

    \node[text = rkdark, font = \small\bfseries, anchor = south west] at (0,3.0) {forward pass: exact};
    \node[fbox] (rank) at (4.1,2.4) {exact rank filter\\$\psi_{a,q}(f)(x) = [f|_{W+x} - a]_{(q)}$};
    \foreach \k in {1,0} \draw[rkgrey, fill = rklight] (8.2 + 0.1*\k,2.0 + 0.1*\k) rectangle ++(0.8,0.8);
    \node at (8.65,3.15) {$\psi_{a,q}(f)$};
    \node[draw = rkdark, circle, minimum size = 0.7cm] (loss) at (12.6,2.4) {$L$};
    \node (g) at (12.6,3.3) {target $g$};
    \draw[-latex, rkgrey] (g) -- (loss);
    \draw[fwd] (1.15,2.4) -- (rank);
    \draw[fwd] (0.95,0.75) -- (rank.south west);
    \draw[fwd] (rank) -- (8.2,2.4);
    \draw[fwd] (9.2,2.4) -- (loss);

    \node[text = rkback, font = \small\bfseries, anchor = north west] at (0,-0.75) {backward pass: smooth};
    \node[bbox] (tau) at (10.1,0.45) {root $\tau^{\star}$ of $S_{\alpha}(t,a,f|_{W+x}) + \beta(\delta)$\\Newton from $t_{0} = \psi_{a,q}(f)(x) + 1/2$};
    \node[bbox] (grad) at (4.4,0.45) {$\dfrac{\partial L}{\partial a(y)} = \dfrac{\partial L}{\partial \psi}\,\dfrac{\partial \tau^{\star}}{\partial a(y)}$\\[4pt]$\dfrac{\partial \tau^{\star}}{\partial a(y)} = -\dfrac{1 - \sigma_{y}(\tau^{\star})^{2}}{\sum_{z \in W} \left(1 - \sigma_{z}(\tau^{\star})^{2}\right)}$};
    \draw[bwd] (loss) |- node[pos = 0.35, right, font = \small, text = rkback] {$\dfrac{\partial L}{\partial \psi}$} (tau.east);
    \draw[bwd] (tau) -- (grad);
    \draw[bwd] (grad) -- (0.95,0.45);
    \draw[-latex, rkgrey, dotted, thick] (8.65,1.95) -- node[right, font = \scriptsize, text = rkgrey, align = left] {straight-through: value $\psi_{a,q}$,\\gradient through $\tau^{\star}$} (8.65,0.92);
\end{tikzpicture}}
		\caption{\footnotesize Training an exact rank filter. The forward pass evaluates the exact filter $\psi_{a,q}$, so the loss is that of the exact operator. The backward pass replaces its sparse subgradient by the derivative of the root $\tau^{\star}$ of the smooth DNF threshold equation.}
		\label{fig_ste}
	\end{figure}

    \section{Experiments}
	\label{SecExperiments}

    In order to illustrate the method, we consider two experiments. The first one follows the denoising experiment of \cite{bottenmuller2025improving} on MNIST \cite{lecun1998gradient} and F-MNIST \cite{xiao2017fashion}, in which we compare networks of rank filters trained with our method and with the exact subgradient to their HSMorph layer, the harmonised version of the SMorph layer of \cite{hermary2022learning}, trained with and without their constraint (HSMorph-C and HSMorph). The second is the segmentation of retinal vessels on the CHASE\_DB1 dataset \cite{fraz2012ensemble}. After a black top-hat transformation of the green channel of the images, we learn a supremum of erosions with our method and with the exact subgradient. The experiments were performed on an NVIDIA H200 141GB GPU and code in JAX will be provided. 
    
    The structuring elements are initialised from independent $\mathcal{N}(0,0.3^{2})$ entries. Our method uses $\alpha = 10$, well below the exactness threshold of the smooth representation (in Experiment 1, $\alpha>510\ln17\approx1445$), to spread out the gradients across the window. In preliminary experiments we observed that better results are obtained for $\alpha \in [5,20]$. The exact subgradient is obtained by sub-differentiating the exact operators. The rank filters are applied without the truncation to $[0,m]$ in \eqref{gen_filter}, in training and in evaluation. In all cases, we minimise the MSE with Adam with default parameters, learning rate $1/255$ halved every $3{,}750$ steps, for at most $100$ epochs, stopping when the training MSE has not improved by $1\%$ for $10$ epochs. We return the parameters of the epoch with the least training MSE on the whole dataset. Each case is run with $10$ seeds, which set the initial parameters, the batch order and, in Experiment 1, the noise.

    \subsection{Experiment 1: image denoising}
    
    The $60{,}000$ training and $10{,}000$ test images of each dataset are normalised to $[0,1]$, with $m = 255$, and then mapped to $[0.5,1]$. After this mapping, the grey levels are $1/510$ apart and the threshold for exactness becomes $\alpha > 510\,\ln(2|W| - 1)$. For $p \in \{1,5,10,25,50\}$, $p/2\%$ of the pixels of every image are set to $0$ and $p/2\%$ to $1$, at positions chosen uniformly at random. Networks are trained with batch size $32$ to recover the clean image. For our method and the exact subgradient, we consider the following compositions of $3 \times 3$ rank filters with independent structuring elements: ero-dila-dila-ero, dila-ero-ero-dila, a median, two medians, and a median followed by each of the first two. For networks that differ only in the order of the erosions and dilations, we report, for each seed, the order with the least training MSE. 
    
    Following \cite{bottenmuller2025improving}, HSMorph networks have four $3 \times 3$ layers followed by a linear layer, with structuring elements initialised as above, the linear scale as $1$ and bias as $0$, and with the input standardised by the mean and standard deviation of the noisy training images. The HSMorph-C networks have the same architecture, but with a penalty term of weight $0.01$ that favours an alternating sequential filter: layers 1-2 and 3-4 are pushed towards opposite operator types, and layers 2-3 towards the same type (see \cite{bottenmuller2025improving} for more details). We report the test PSNR with peak $1$, averaged over the test images.

    \begin{table}[!t]
    	\centering
    	\caption{\footnotesize Test PSNR (dB) of the denoising networks, mean $\pm$ standard deviation over $10$ seeds. For each dataset and noise level, the best mean is in bold. ASF: four alternating erosion and dilation layers.}
    	\label{tab_denoising}
    	\scriptsize
    	\setlength{\tabcolsep}{3pt}
    	\begin{tabular}{llccccc}
    		\hline
    		Network & Training & $1\%$ & $5\%$ & $10\%$ & $25\%$ & $50\%$ \\
    		\hline
    		\multicolumn{7}{c}{MNIST} \\
    		\hline
    		\multirow{2}{*}{ASF} & Ours & $\mathbf{33.03 \pm 1.41}$ & $\mathbf{29.56 \pm 0.60}$ & $26.88 \pm 0.73$ & $22.79 \pm 0.38$ & $18.78 \pm 0.09$ \\
    		& Exact subgradient & $31.47 \pm 1.56$ & $27.58 \pm 0.77$ & $25.98 \pm 0.77$ & $22.30 \pm 0.46$ & $18.50 \pm 0.30$ \\
    		\multirow{2}{*}{Median} & Ours & $29.70 \pm 1.42$ & $28.16 \pm 1.16$ & $26.87 \pm 1.01$ & $22.20 \pm 0.60$ & $15.20 \pm 0.12$ \\
    		& Exact subgradient & $28.87 \pm 1.99$ & $27.09 \pm 1.73$ & $25.39 \pm 1.47$ & $21.35 \pm 0.80$ & $14.93 \pm 0.24$ \\
    		\multirow{2}{*}{Median $\times$ 2} & Ours & $28.73 \pm 1.05$ & $27.67 \pm 0.78$ & $26.73 \pm 0.52$ & $23.95 \pm 0.30$ & $18.80 \pm 0.19$ \\
    		& Exact subgradient & $28.66 \pm 1.70$ & $26.71 \pm 1.23$ & $25.76 \pm 1.03$ & $23.35 \pm 0.39$ & $18.56 \pm 0.28$ \\
    		\multirow{2}{*}{Median + ASF} & Ours & $31.03 \pm 2.12$ & $28.92 \pm 1.00$ & $\mathbf{27.04 \pm 0.57}$ & $\mathbf{24.02 \pm 0.32}$ & $\mathbf{19.93 \pm 0.13}$ \\
    		& Exact subgradient & $30.34 \pm 1.59$ & $27.47 \pm 1.31$ & $26.43 \pm 0.77$ & $23.65 \pm 0.36$ & $19.77 \pm 0.19$ \\
    		\multicolumn{2}{l}{HSMorph-C} & $31.64 \pm 1.58$ & $28.75 \pm 0.50$ & $26.59 \pm 0.46$ & $21.85 \pm 0.77$ & $18.58 \pm 0.56$ \\
    		\multicolumn{2}{l}{HSMorph} & $32.81 \pm 0.27$ & $28.86 \pm 0.12$ & $26.16 \pm 0.55$ & $20.29 \pm 0.08$ & $18.16 \pm 0.01$ \\
    		\multicolumn{2}{l}{Noisy input} & $25.56$ & $18.50$ & $15.59$ & $11.58$ & $8.57$ \\
    		\hline
    		\multicolumn{7}{c}{F-MNIST} \\
    		\hline
    		\multirow{2}{*}{ASF} & Ours & $29.69 \pm 0.44$ & $27.22 \pm 0.37$ & $25.52 \pm 0.19$ & $22.94 \pm 0.13$ & $19.65 \pm 0.08$ \\
    		& Exact subgradient & $29.57 \pm 0.64$ & $26.89 \pm 0.75$ & $24.91 \pm 0.51$ & $22.66 \pm 0.25$ & $19.27 \pm 0.34$ \\
    		\multirow{2}{*}{Median} & Ours & $30.37 \pm 1.91$ & $28.27 \pm 1.66$ & $26.15 \pm 1.14$ & $21.53 \pm 0.53$ & $15.36 \pm 0.10$ \\
    		& Exact subgradient & $29.06 \pm 1.87$ & $27.44 \pm 2.09$ & $25.02 \pm 1.18$ & $21.02 \pm 0.75$ & $15.20 \pm 0.21$ \\
    		\multirow{2}{*}{Median $\times$ 2} & Ours & $\mathbf{30.42 \pm 1.41}$ & $\mathbf{28.85 \pm 1.11}$ & $\mathbf{26.68 \pm 0.66}$ & $23.68 \pm 0.23$ & $19.38 \pm 0.13$ \\
    		& Exact subgradient & $29.60 \pm 1.71$ & $27.22 \pm 1.64$ & $25.57 \pm 0.73$ & $23.59 \pm 0.27$ & $19.21 \pm 0.28$ \\
    		\multirow{2}{*}{Median + ASF} & Ours & $30.34 \pm 1.11$ & $28.20 \pm 1.23$ & $26.65 \pm 0.80$ & $\mathbf{23.75 \pm 0.17}$ & $\mathbf{20.93 \pm 0.09}$ \\
    		& Exact subgradient & $29.38 \pm 0.60$ & $26.96 \pm 1.03$ & $25.49 \pm 0.88$ & $23.57 \pm 0.18$ & $20.78 \pm 0.21$ \\
    		\multicolumn{2}{l}{HSMorph-C} & $28.35 \pm 1.05$ & $25.93 \pm 0.55$ & $24.92 \pm 0.51$ & $22.85 \pm 0.18$ & $19.13 \pm 0.71$ \\
    		\multicolumn{2}{l}{HSMorph} & $28.94 \pm 0.47$ & $26.56 \pm 0.30$ & $25.06 \pm 0.33$ & $21.82 \pm 0.85$ & $18.59 \pm 0.05$ \\
    		\multicolumn{2}{l}{Noisy input} & $25.23$ & $18.17$ & $15.26$ & $11.25$ & $8.24$ \\
    		\hline
    	\end{tabular}
    \end{table}

    Table~\ref{tab_denoising} shows that, for every network and noise level, training with our method gives a higher mean test PSNR than training with the exact subgradient, with gains of up to $2.0$~dB (ASF on MNIST at 5\% noise) and, in most cases, a smaller standard deviation over the seeds. Since both methods are evaluated by the same exact operators, these differences are due only to the smooth backward pass, whose gain is largest at moderate noise ($1.1$~dB on average at 5\% and 10\%). The best of our four networks also outperforms the smooth HSMorph networks in all ten columns, by a small margin on MNIST at 1\% noise. The margin over HSMorph-C reaches $2.2$~dB on MNIST (25\% noise) and $2.9$~dB on F-MNIST (5\% noise). Our best networks also improve on the best results reported in \cite{bottenmuller2025improving} in nine of the ten settings, the exception being MNIST at 50\% noise ($19.93$ against $20.12$~dB). Hence, our method outperforms smooth morphological networks while the trained network remains a composition of exact rank filters, and the operator that is learned is exactly the one that is evaluated.

    \subsection{Experiment 2: retinal vessels}
    
    The CHASE\_DB1 dataset has $28$ images of $999 \times 960$ pixels labelled by two observers, and we use the first $20$ to train and the last $8$ to test, with the vessels of the first observer as target. The input is the black top-hat of the green channel by a disk of radius $12$, divided by its median over the vessel pixels of the training images. We call foreground the set where the red channel exceeds $20/255$, eroded by a disk of radius $4$. The network is the supremum of $256$ erosions by $15 \times 15$ structuring elements, trained by our method and by the exact subgradient. We consider a batch size $1$ on the tiles of $64 \times 64$ pixels of a grid covering each training image, keeping those with more than half of their pixels in the foreground ($3{,}203$ tiles), with the binary vessel image as target. Metrics are computed on the foreground pixels of the test images: the AUC of the output, and the accuracy, sensitivity, specificity and F1 at the threshold maximising the training F1.

    \begin{table}[!t]
    	\centering
    	\caption{\footnotesize Test metrics of the supremum of $256$ erosions on CHASE\_DB1, computed on the foreground, mean $\pm$ standard deviation over $10$ seeds, and of the top-hat input thresholded in the same way. The best mean of each metric is in bold.}
    	\label{tab_vessels}
    	\scriptsize
    	\setlength{\tabcolsep}{3pt}
    	\begin{tabular}{lccccc}
    		\hline
    		Training & AUC & F1 & Accuracy & Sensitivity & Specificity \\
    		\hline
    		Ours & $\mathbf{0.9279 \pm 0.0002}$ & $\mathbf{0.6415 \pm 0.0004}$ & $\mathbf{0.9303 \pm 0.0003}$ & $\mathbf{0.6765 \pm 0.0025}$ & $\mathbf{0.9561 \pm 0.0005}$ \\
    		Exact subgradient & $0.9096 \pm 0.0015$ & $0.6129 \pm 0.0024$ & $0.9248 \pm 0.0008$ & $0.6459 \pm 0.0086$ & $0.9531 \pm 0.0016$ \\
    		Top-hat input, no learning & $0.9042$ & $0.5834$ & $0.9166$ & $0.6336$ & $0.9453$ \\
    		\hline
    	\end{tabular}
    \end{table}
    
    \begin{figure}[!t]
    	\centering
    	\includegraphics[width=\textwidth]{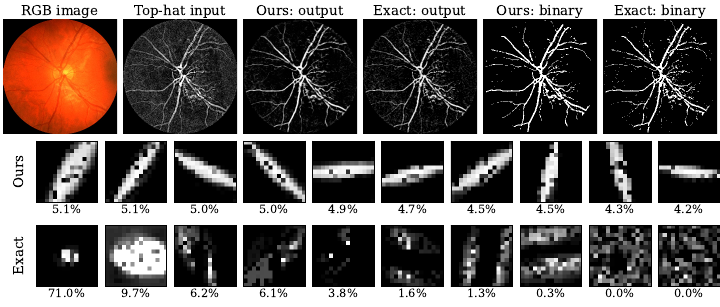}
    	\caption{\footnotesize Retinal vessels, seed $0$. Top: a test image of CHASE\_DB1, its black top-hat input, and the output of the supremum of $256$ erosions trained by our method and by the exact subgradient, in greyscale and thresholded at the value that maximises the F1 on the training images, on the foreground. Bottom: structuring elements of the $10$ erosions that most often attain the supremum on the vessel pixels of the training images detected by the network (labelled as vessel and above the threshold), with the share of these pixels below each. Each element $a$ is shown as $a - \max a$, from white ($0$) to black ($\leq -1$).}
    	\label{fig_vessels}
    \end{figure}
    
    Table~\ref{tab_vessels} shows that our method improves on the exact subgradient in all five metrics, whereas the exact subgradient improves little on the top-hat input. Since both networks are evaluated by the same exact operators, the difference is due only to the backward pass. In Figure~\ref{fig_vessels}, the erosions most active in our method are line segments of different widths and orientations, the shapes expected to fit into vessels~\cite{zana2001segmentation}, and the detected vessel pixels are shared among $182$ erosions, the top-10 accounting for $47\%$ of them. With the exact subgradient, only $21$ erosions are active and two of them account for $81\%$. This reflects the gradient sparsity of exact operators \cite{blusseau2024training,dimitrova2026optimizing}: erosions that rarely attain the supremum are rarely updated, and most structuring elements remain close to their initialisation, with $75\%$ of their entries never updated (fewer than $0.1\%$ with our method) and a median relative change $\|a - a_{0}\|/\|a_{0}\|$ of $0.05$ (against $1.27$).

    \section{Final remarks}
\label{SecFinal}

In this paper, we proposed a principled way to train exact rank filters with gradient descent via a straight-through estimator, in which the backward pass relies on the threshold decomposition combined with the DNF neural layer. The results show that our method is superior to training with the exact subgradient and is competitive with, if not better than, smooth approximations of erosions and dilations, underscoring that it is possible to efficiently train morphological operators via gradient descent without losing the exactness. In particular, our method applies to any rank filter, while, in general, the smooth layers proposed in the literature approximate only erosions and dilations. Since compositions, infima and suprema of rank filters are trained in the same way, the procedure extends to grey-level images the framework of discrete morphological neural networks \cite{marcondes2024discrete,marcondes2025unrestricted}, in which the architecture is designed from prior knowledge and the trained operator remains exact and interpretable.

We leave many topics to be explored in future research. First, it would be interesting to learn the rank $\delta$ so the architecture is not fixed beforehand, which is closer to the framework of \cite{bottenmuller2025improving,hermary2022learning}. Second, the experiments illustrate that our method is less sensitive than the exact subgradient to the initial parameters, which are selected independently and with high variance, a fact that needs further investigation. Finally, the proposed framework may be extended to non-increasing operators by representing them as suprema of sup-generating operators, following the Banon-Barrera decomposition \cite{banon1993decomposition}, and other surrogates, such as smooth morphological layers, could be used in the backward pass.

    \section{Proofs}
\label{SecProof}

\begin{proof}[Proof of Lemma \ref{lem1}]
	Let $X^{+} = \{y \in W: X(y) = +1\}$, so that $\sum_{y \in W} X(y) = 2|X^{+}| - |W|$ and $X_{(q)} = +1$ iff $|X^{+}| \geq |W| - q + 1$. By \eqref{delta_formula}, $\beta(\delta) = \delta\gamma(1 - |W|) = \gamma(2q - |W| - 1)$, hence
	\begin{align*}
		\gamma \sum_{y \in W} X(y) + \beta(\delta) = \gamma\left(2(|X^{+}| + q - |W|) - 1\right),
	\end{align*}
	which is at least $\gamma > 0$ if $X_{(q)} = +1$ and at most $-\gamma < 0$ if $X_{(q)} = -1$, so $\eta(X) = X_{(q)}$. The particular cases follow from $q = |W|$, $q = 1$ and $q = (|W|+1)/2$ in \eqref{delta_formula}. \hfill$\blacksquare$
\end{proof}

\begin{proof}[Proof of Corollary \ref{main_cor}]
	Since $f_{W}(y)-a(y)-t$ is an integer, $|\alpha(f_{W}(y)-a(y)-t+1/2)| \geq \alpha/2$, and $\alpha > \ln(2|W|-1)$ gives $1-\sigma(\alpha/2) < 1/|W|$. Hence each term of $S_{\alpha}(t,a,f_{W})$ differs from $\sign\{f_{W}(y)-a(y)-t\}$ by less than $1/|W|$, and $S_{\alpha}(t,a,f_{W})+\beta(\delta)$ differs from $\sum_{y \in W}\sign\{f_{W}(y)-a(y)-t\}+\beta(\delta)$ by less than one. By the proof of Lemma~\ref{lem1} with $\gamma=1$, the latter is at least $1$ if $[\left(\mathds{1}\{f_{W}(y)-a(y)\geq t\}: y\in W\right)]_{(q)}=1$ and at most $-1$ otherwise, so both have the same sign, and the result follows from \eqref{rank_sum}. \hfill$\blacksquare$
\end{proof}

\begin{proof}[Proof of Proposition \ref{prop_dev}]
	Let $F(t,a) \coloneqq S_{\alpha}(t,a,f_{W}) + \beta(\delta)$, so that $F(\tst,a) = 0$. Since $\sigma' = 1-\sigma^{2}$, we have $\partial F/\partial t = -\alpha\sum_{z \in W}(1-\sigma_{z}(t)^{2}) < 0$ and $\partial F/\partial a(y) = -\alpha(1-\sigma_{y}(t)^{2})$, and the implicit function theorem gives $\partial \tst/\partial a(y) = -(\partial F/\partial a(y))/(\partial F/\partial t)$ at $t=\tst$, which is \eqref{derivp}. \hfill$\blacksquare$
\end{proof}

\section*{Acknowledgements}

This work was supported by computational resources provided by the Australian Government through the National Computational Infrastructure (NCI) under the ANU Startup Scheme. %

	\bibliographystyle{splncs04}
	\bibliography{Ref}

\end{document}